
\documentclass[letterpaper, 10 pt, conference]{ieeeconf}  

\IEEEoverridecommandlockouts                              
\overrideIEEEmargins

\usepackage{amsmath,amssymb,amsfonts}
\usepackage{algorithm}
\usepackage{graphicx}
\usepackage{textcomp}
\usepackage{booktabs}
\usepackage{subcaption}
\usepackage[noend]{algpseudocode}
\usepackage{csquotes}
\usepackage{multirow}
\usepackage{tabularx}
\usepackage{soul}   
\usepackage{amsmath}    
\usepackage{bm}         
\usepackage[table]{xcolor}
\usepackage[hidelinks]{hyperref}
\hypersetup{
  colorlinks,
  citecolor= blue,
  linkcolor=black,
  urlcolor=black}
\usepackage[style=ieee, url=false, doi=false, natbib=true, mincitenames=1, maxcitenames=1, maxbibnames=99]{biblatex}
\addbibresource{main.bib}

\newtheorem{theorem}{Theorem}

\title{\LARGE \bf
Optimal Transport-Guided Safety in Temporal Difference Reinforcement Learning
}


\author{Zahra Shahrooei and Ali Baheri
\thanks{Zahra Shahrooei and Ali Baheri are with the Department of Mechanical Engineering, Rochester Institute of Technology, Rochester, NY 14623 USA. (e-mail: zs9580@rit.edu; akbeme@rit.edu).}%
}

\begin{document}

\maketitle
\thispagestyle{empty}
\pagestyle{empty}

\begin{abstract}
The primary goal of reinforcement learning is to develop decision-making policies that prioritize optimal performance, frequently without considering safety. In contrast, safe reinforcement learning seeks to reduce or avoid unsafe behavior. This paper views safety as taking actions with more predictable consequences under environment stochasticity and introduces a temporal difference algorithm that uses optimal transport theory to quantify the uncertainty associated with actions. By integrating this uncertainty score into the decision-making objective, the agent is encouraged to favor actions with more predictable outcomes. We theoretically prove that our algorithm leads to a reduction in the probability of visiting unsafe states. We evaluate the proposed algorithm on several case studies in the presence of various forms of environment uncertainty. The results demonstrate that our method not only provides safer behavior but also maintains the performance. A Python implementation of our algorithm is available at \href{https://github.com/SAILRIT/Risk-averse-TD-Learning}{https://github.com/SAILRIT/OT-guided-TD-Learning}.

\end{abstract}


\maketitle

\section{INTRODUCTION}\label{sec:introduction}

Safe reinforcement learning (RL) algorithms are essential for many real-world applications in robotics \cite{richter2019open}, autonomous systems \cite{kendall2019learning}, finance \cite{hambly2023recent}, and healthcare \cite{gu2024review}. Various approaches have been explored to integrate safety into RL. A comprehensive review of these methods can be found in \cite{garcia2015comprehensive}.

Several early safe RL studies incorporate safety into the optimization criterion \cite{heger1994consideration,gaskett2003reinforcement,sato2001td,geibel2005risk,achiam2017constrained,chow2019lyapunov}. For example, in the worst-case criterion, a policy is considered optimal if it maximizes the worst-case return, which reduces variability due to inherent or parameter uncertainty \cite{heger1994consideration,gaskett2003reinforcement}. The optimization criterion can also be adjusted to balance return and a risk metric, such as a linear combination of return and the variance of return \cite{sato2001td} or as the probability of entering an error state \cite{geibel2005risk}. The other way is to optimize the return subject to constraints, resulting in the constrained optimization criterion \cite{achiam2017constrained,chow2019lyapunov,brunke2022safe}. Other approaches aim to avoid heuristic exploration strategies, which are blind to the safety of actions. Instead, they propose modifications to the exploration process to guide the agent toward safer regions. Safe exploration techniques include prior knowledge of the task for search initialization \cite{okawa2022safe}, learn from human demonstrations \cite{dai2023safe}, and incorporate a risk metric into the algorithm \cite{gehring2013smart,law2005risk}.

In the latter, uncertainty estimates are commonly used to direct exploration and enhance policy stability. To accomplish this, a method is to estimate the uncertainty of future returns. For example, \citet{o2018uncertainty} propose an uncertainty Bellman equation to approximate the variance of Q-value posterior distributions. Distributional RL studies \cite{malekzadeh2023unified, dabney2018distributional, mavrin2019distributional, dabney2018distributional, zhou2021non, tang2018exploration} explicitly estimate the distribution of returns and measure aleatory uncertainty, which originates from intrinsic stochasticity in the environment and is irreducible. Our work differs from these works in that we do not need the return distribution to guide the agent toward taking safer actions. There are other works that use information-directed sampling
to prevent the negative consequences of aleatory uncertainty by
making a trade-off between instantaneous regret and information gain \cite{chen2022efficient,nikolov2018information}. However, these information-directed approaches require learning the transition dynamics of the environment. An alternative approach is to assess the uncertainty in the model dynamics rather than concentrating on the uncertainty of the returns. For instance, \citet{yu2020mopo} estimate the uncertainty of the dynamics model and penalize actions that result in uncertain returns. Our method differs from these approaches because it does not require any model or prior knowledge of the dynamics to direct the exploration.

In this work, we define safety in stochastic environments as choosing actions with more predictable consequences. We explore the use of optimal transport (OT) theory to quantify the uncertainty level corresponding to different actions. OT is highly valued for its ability to measure and optimize the alignment between probability distributions \cite{santambrogio2015optimal} by minimizing the cost of transforming one distribution into another while taking to account the geometry of the distributions as shown in Fig. \ref{OT}. There are a few applications of OT in safe RL \cite{baheri2023risk,baheri2023understanding,queeney2023optimal,metelli2019propagating,shahrooei2024optimal,baheri2025wasserstein,nkhumise2025studying}. For example, \citet{metelli2019propagating} propose a novel approach called Wasserstein Q-learning (WQL), which uses Bayesian posterior distributions and Wasserstein barycenters to model and propagate uncertainty in RL. Their method demonstrates improved exploration and faster learning in tabular domains compared to classic RL algorithms. \citet{baheri2023risk} uses OT for reward shaping in Q-learning. Through the minimization of the Wasserstein distance between the policy’s stationary distribution and a predefined safety distribution over state space, the agent is encouraged to visit safe states more frequently.

\begin{figure}[!t]
    \centering
     \includegraphics[scale=0.45]{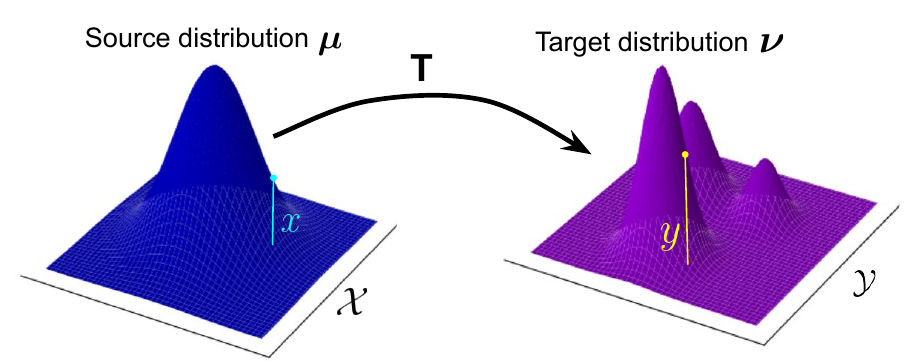}
     \caption{Conceptual illustration of optimal transport theory. Here, $\mu(x)$ is a probability distribution over the source space $\mathcal{X}$, and $\nu(y)$ is a probability distribution over the target space $\mathcal{Y}$. The arrows represent the optimal transport plan $T$, which reallocates mass from $\mu$ to $\nu$ to minimize the total transport $\operatorname{cost} c(x, y)$.}
     \label{OT}
\end{figure}

In this paper, we propose an OT-based temporal difference (TD) framework that considers both the reward and the uncertainty associated with actions without relying on expert knowledge or predefined safety constraints. We use Wasserstein distance to quantify the total uncertainty score at each state and prioritize the actions that contribute less to this score. This encourages the agent to take less uncertain actions more frequently, which leads to safer overall behavior.

The contributions of this paper are: (i) introduction of an OT-guided TD learning algorithm to enhance agent safety by integrating action uncertainty into decision-making in addition to the reward signal without requiring prior expert knowledge or safety constraints, (ii) deriving safety bounds of the algorithm which demonstrates less visitation to unsafe states (Theorem \ref{thm:safety_bounds}), and (iii) applications on various case studies under different forms of uncertainty in reward function, transition function, and states to show our algorithm reduces visiting unsafe states while preserving performance.

We structure this paper as follows: Section \ref{sec:prelim} provides a mathematical overview of temporal difference learning and optimal transport theory. Section \ref{sec:SafeSarsa} elaborates on incorporating optimal transport-based action uncertainty into temporal difference learning. Section \ref{sec:Evaluation} presents experiments on various environments and their results. Section \ref{sec: Conclusion} concludes and suggests future direction.

\section{PRELIMINARIES}
\label{sec:prelim}

This section reviews Markov decision processes, partially observable Markov decision processes, temporal difference learning algorithms, and the basic principles of OT theory.

\subsection{MARKOV DECISION PROCESSES AND TEMPORAL DIFFERENCE LEARNING}

\noindent{\textbf{Markov Decision Processes (MDPs). }}MDPs \cite{sutton1998reinforcement} represent a fully observable RL environment. A finite MDP is defined by the tuple $\mathcal{M} = (\mathcal{S}, \mathcal{A}, \mathcal{T},\mathcal{R}, \gamma)$, where $\mathcal{S}$ is a set of states, $\mathcal{A}$ is a set of actions, $\mathcal{T}: \mathcal{S} \times \mathcal{A} \times \mathcal{S} \to [0, 1]$ 
is the transition probability function, where $\mathcal{T}(s'|s, a)$ represents the probability of transitioning to state $s'$ when action $a$ is taken in state $s$, $\mathcal{R}: \mathcal{S} \times \mathcal{A} \to \mathbb{R}$ is the reward function where \(\mathcal{R}(s, a) = r\) represents the immediate reward received for taking action \(a\) in state \(s\), and \(\gamma \in [0, 1)\) is the discount factor. The agent follows a policy $\pi : \mathcal{S} \times \mathcal{A} \rightarrow [0, 1]$ that maps states to action probabilities, with the objective to maximize the expected discounted return, $G_t = \sum_{k=0}^{\infty} \gamma^k r_{t+k+1}$.

\noindent{\textbf{Temporal Difference Learning. }}The Q-value of action $a$ in state $s$ under policy $\pi$ is given by $Q_{\pi}(s, a) = \mathbb{E}_{\pi}[G_t | s_t = s, a_t = a]$, which can be incrementally learned. In one-step TD, the Q-value update rule is $Q(s_t, a_t) \leftarrow Q(s_t, a_t) + \alpha \delta_t$ where $\delta_t$ is the TD error at time $t$ and $\alpha$ is the learning rate. In Q-learning algorithm \cite{watkins1992q}, the TD error is defined as:
\begin{equation}
\delta_t = r_{t+1} + \gamma \max_{a} Q(s_{t+1}, a) - Q(s_t, a_t)
\label{eq:4} 
\end{equation}
Similarly, in SARSA algorithm \cite{sutton1998reinforcement}, the TD error is given by: 
\begin{equation}
\delta_t = r_{t+1} + \gamma Q(s_{t+1}, a_{t+1}) - Q(s_t, a_t).
\label{eq:3} 
\end{equation}
SARSA($\lambda$) extends one-step SARSA using eligibility traces to incorporate multi-step updates and improve learning efficiency \cite{sutton1998reinforcement}. An eligibility trace tracks the degree to which each state-action pair has been recently visited, which enables updates to consider a weighted history of past experiences. The Q-value update rule in SARSA($\lambda$) is:
\begin{equation}
Q_{t+1}(s, a) = Q_t(s, a) + \alpha \delta_t e_t(s, a)
\end{equation}
where $e_t(s, a)$ is the eligibility trace. The eligibility trace $e_t(s, a)$ decays over time and is updated as follows:
\begin{equation}
e_t(s, a) = 
\begin{cases} 
\gamma \lambda e_{t-1}(s, a) + 1 & \text{if } s = s_t \text{ and } a = a_t \\
\gamma \lambda e_{t-1}(s, a) & \text{otherwise}
\end{cases}
\end{equation}
where $\lambda \in [0, 1]$ controls the trace decay rate. TD algorithms often use an $\epsilon$-greedy strategy for action generation. The parameter $\epsilon$ can either be fixed or decay over time to balance exploration and exploitation. The behavioral policy is expressed as:

\begin{equation}
\pi(s_t, a_t) =
\begin{cases}
1 - \epsilon + \dfrac{\epsilon}{|\mathcal{A}|} & \text{if } a_t = \arg\max_{a'_t} Q(s_t, a'_t) \\
\dfrac{\epsilon}{|\mathcal{A}|} & \text{otherwise}
\end{cases}
\label{Eq:EPG}
\end{equation}

\noindent{\textbf{Partially Observable MDPs (POMDPs). }}POMDPs \cite{krishnamurthy2016partially} generalize MDPs to account for environments where the agent cannot directly observe the underlying state. A POMDP is defined by the tuple $\mathcal{PM} = (\mathcal{S}, \mathcal{A}, \mathcal{O}, \mathcal{T}, \mathcal{R}, \mathcal{Z}, \gamma)$, where $\mathcal{S}$, $\mathcal{A}$, $\mathcal{T}$, $\mathcal{R}$, and $\gamma$ maintain their definitions from MDPs, and $\mathcal{O}$ is a set of observations the agent can perceive. The observation function $\mathcal{Z}: \mathcal{S} \times \mathcal{A} \to \mathcal{O}$ maps states and actions to observation probabilities.

\subsection{OPTIMAL TRANSPORT THEORY}

The OT theory aims to find minimal-cost transport plans to move one probability distribution to another within a metric space. This involves a cost function $c(x, y)$ and two probability distributions, $\mu(x)$ and $\nu(y)$. The goal is to find a transport plan that minimizes the cost of moving $\mu$ to $\nu$ under $c(x, y)$, often using the Euclidean distance for explicit solutions \cite{villani2009optimal}.

We focus on discrete OT theory, assuming $\mu$ and $\nu$ as source and target distributions, respectively, both belonging to $\mathcal{P}p(\mathbb{R}^n)$, i.e., they are probability measures with finite \(p\)-th moments \cite{villani2009optimal} with finite supports $\{x_i\}_{i=1}^{m_1}$ and $\{y_j\}_{j=1}^{m_2}$, and corresponding probability masses $\{a_i\}_{i=1}^{m_1}$ and $\{b_j\}_{j=1}^{m_2}$. The cost between support points is represented by an $m_1 \times m_2$ matrix $C$, where $C_{ij} = |x_i - y_j|_p^p$ denotes the transport cost from $x_i$ to $y_j$. The OT problem seeks the transport plan $P^\ast$ that minimizes the cost while ensuring that the marginals of $P^\ast$ match $\mu$ and $\nu$:

\begin{equation}
\min_{P \in \mathbb{R}^{m_1 \times m_2}} \sum_{i=1}^{m_1} \sum_{j=1}^{m_2} P_{ij} C_{ij}  
\label{eq:3}
\end{equation}
Here, the coupling matrix \( P_{ij} \) indicates the probability mass transported from \( x_i \) to \( y_j \) and \( \sum_{j=1}^{m_2} P_{ij} = a_i \)
for all \( i \), and \( \sum_{i=1}^{m_1} P_{ij} = b_j\) for all \( j \). Additionally, \( P_{ij} \geq 0 \) for all \( i, j \).

\textit{Definition 1 \cite{villani2009optimal}:} 
The Wasserstein distance between $\mu$ and $\nu$ is computed using the OT plan $P^\ast$ obtained from solving the above linear programming problem:
\begin{equation}
W_p(\mu, \nu) = \left( \langle P^\ast, C \rangle \right)^{\frac{1}{p}}
\label{eq:4}
\end{equation}
where $ p\geq 1$ and $\langle \cdot, \cdot \rangle$ denotes the inner product.

To enhance numerical stability and computational efficiency, an entropy regularization term can be added to the objective, leading to the regularized Wasserstein distance, which can be solved iteratively using the Sinkhorn iterations \cite{cuturi2013sinkhorn}.

\textit{Definition 2 \cite{cuturi2013sinkhorn}:} The Entropy-regularized OT problem is formulated as:
\begin{equation}
\min_{P \in \mathbb{R}^{m_1 \times m_2}} \quad \sum_{i=1}^{m_1} \sum_{j=1}^{m_2} P_{ij} C_{ij} + \varepsilon \sum_{i=1}^{m_1} \sum_{j=1}^{m_2} P_{ij} (\log P_{ij} - 1)
\label{eq:5}
\end{equation}
where $\varepsilon > 0$ is a regularization parameter that balances transport cost and the entropy of the transport plan $P$.

\section{METHODOLOGY}
\label{sec:SafeSarsa}

Consider an RL agent interacting with an environment defined by a discrete MDP $\mathcal{M} = (\mathcal{S}, \mathcal{A}, \mathcal{T},\mathcal{R}, \gamma)$, where a subset of states \(\mathcal{S}_{\text{u}} \subset \mathcal{S}\), denoted as unsafe states, exhibits stochasticity in the reward function \(\mathcal{R}\) or transition function \(\mathcal{T}\). These unsafe states are undesirable due to their unpredictable outcomes, and the agent aims to visit them less frequently to enhance safety. For each state $s$, we define the \emph{Q-distribution} $Q_s$ as the normalized distribution over the agent’s estimated Q-values for the available actions $a_{i}$, $i = 1, \ldots, N$. Formally, we have

\begin{equation}
Q_s = \sum_{i=1}^{N} q_i^s \delta_{\mathbf{a}_i}
\end{equation}
where $q_i^s$ is the probability assigned to action $a_i$ based on the current Q-value estimations, and $\delta_{\mathbf{a}_i^s}$ is the Dirac measure centered at $a_i$. Intuitively, $Q_s$ captures how the agent’s Q-values are distributed across actions at state $s$. We also introduce a corresponding T-distribution \( T_t \), which is constructed similarly to the Q-distribution but is derived from the target values used in SARSA update rule. Specifically, for each state \( s \) and each action \( a_i \) in \( s \), we maintain a target value in a buffer, which is updated whenever \( a_i \) is selected in \( s \). When the agent chooses action \( a_t \) in state \( s_t \), the target value for \( (s_t, a_t) \) is updated to $r_{t+1} + \gamma Q(s_{t+1}, a_{t+1})$, where \( r_{t+1} \) is the observed reward and \( \gamma \) is the discount factor. For actions \( a_i \neq a_t \) in state \( s_t \), the target value preserves the value from the most recent instance in which \( a_i \) was selected in \( s_t \). Hence, we have 

\begin{equation}
\quad T_t = \sum_{i=1}^{N} p_i^t \delta_{\mathbf{a}_i}
\end{equation}
where $p_i^t$ is the probability associated with action $a_i$ based on target values.

For each action $a_i$ in state $s$, we define an uncertainty score $U(s, a_i)$. The goal is to quantify how much an action contributes to the overall uncertainty score of the agent’s policy in that state. First, we compute the OT map $P^\ast$ between $Q_s$ and $T_t$ by solving the entropy-regularized Wasserstein distance formulation. We note that here, the cost matrix $C(a_{i}, a_{j}), \text{ for } i,j = 1, \dots, N$ is inherently task-dependent and its definition varies depending on the structure of the action space. The total uncertainty score in state $s$ is measured by the Wasserstein distance $W(Q_s, T_t)$. For an action $a_i$, the flow $\Delta(s, a_i)$ is defined as the absolute difference between the outgoing flow (transport from $a_i$ to other actions $b \neq a_i$) and the incoming flow (transport from other actions $b \neq a_i$ to $a_i$):
\begin{equation}
\Delta(s, a_i) = \left| \sum_{b \neq a_i} P^\ast_{a_i, b} - \sum_{b \neq a_i} P^\ast_{b, a_i} \right|
\label{eq:12}
\end{equation}
Here, the first summation $\sum_{b \neq a_i} P^\ast_{a_i, b}$ represents how much probability mass is transported away from $a_i$ to other actions, while the second summation $\sum_{b \neq a_i} P^\ast_{b, a_i}$ denotes how much mass flows into $a_i$ from other actions. The absolute difference between these two flows, $\Delta(s, a_i)$, captures how much the probability of an action in $Q_s$ must be \enquote{redistributed} to match $T_t$. We subsequently normalize this value by the total uncertainty in state $s$:
\begin{equation}
U(s, a_i) = \frac{\Delta(s,a_i)}{W(Q_s, T_t)} \label{eq:13}
\end{equation}
\begin{figure}[!b]
    \centering
     \includegraphics[scale=0.3]{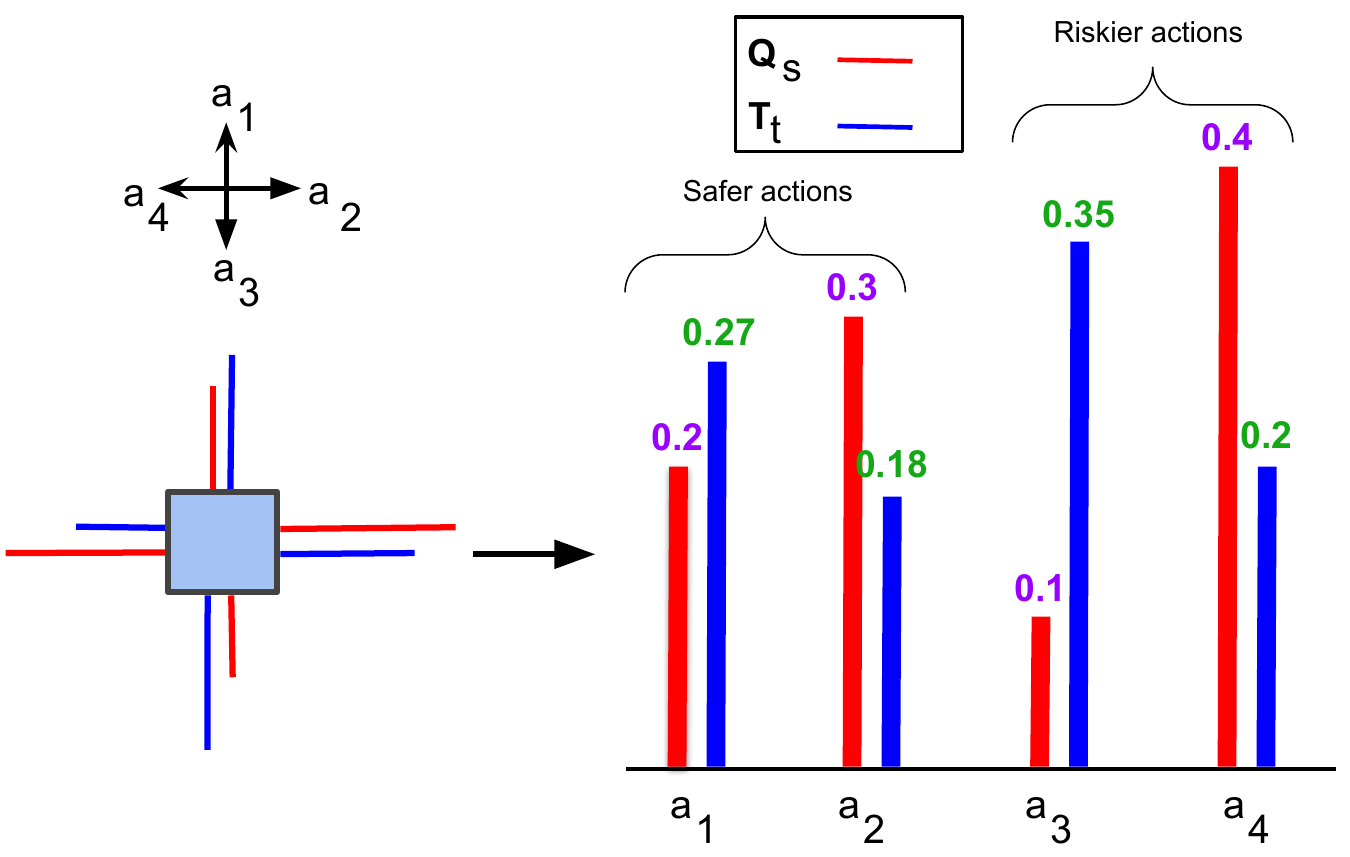}
     \caption{Example: For a fixed state and four available actions, we compute the uncertainty score of the state $W(Q_s, T_t)$ and the contribution of each action to this uncertainty score.}
     \label{OT2}
\end{figure}

\begin{table}[!b]
    \centering
    \caption{Example: Incorporating uncertainty into the behavioral policy considering $\beta=0.5$. Standard SARSA chooses $a_{4}$, while our algorithm prefers $a_{2}$. The safest available action is $a_{1}$.}
    \label{tab:example}
    \renewcommand{\arraystretch}{1.5} 
    \begin{tabular}{@{}lccc@{}}
        \toprule
        \textbf{Action} & \boldmath{$Q(s,a)$} & \boldmath{$U(s,a)$} & \boldmath{$Q(s,a)-\beta U(s,a)$} \\
        \midrule
        $a_{1}$ & $0.2$ & \boldmath{\hl{$0.21$}} & $0.09$ \\
        $a_{2}$ & $0.3$ & $0.37$ & \boldmath{\hl{$0.112$}} \\
        $a_{3}$ & $0.1$ & $0.78$ & $-0.29$ \\
        $a_{4}$ & \boldmath{\hl{$0.4$}} & $0.62$ & $0.087$ \\
        \bottomrule
    \end{tabular}
    \renewcommand{\arraystretch}{1.0} 
\end{table}

The uncertainty score $U(s, a_i)$ thus reflects how much action $a_i$ is responsible for the mismatch between $Q_s$ and $T_t$. Equivalently, it reveals to what extent the action $a_i$ needs to be adjusted for $Q_s$ to align with $T_t$ in a cost-efficient manner. Higher values of $\Delta(s, a_i)$ indicate larger corrections to the probability of $a_i$ in $Q_s$, which suggests more unpredictable outcomes. While standard SARSA does not account for safety, we incorporate the above uncertainty score into the behavioral policy. Let $\beta$ be a sensitivity coefficient that specifies how strongly the agent prioritizes safety relative to reward. We then modify the $\epsilon$-greedy behavioral policy $\pi^b$ as follows:

{\small
\begin{equation}
\pi^b(a \mid s) =
\begin{cases}
1 - \epsilon + \dfrac{\epsilon}{|\mathcal{A}|} & \text{if } a = \arg\max_{a'} [Q(s, a') - \beta U(s, a')] \\
\dfrac{\epsilon}{|\mathcal{A}|} & \text{otherwise}.
\end{cases}
\label{eq:14}
\end{equation}
}
where the $\epsilon$ component guarantees infinite exploration of all actions in visited states over time. Actions that lead to unsafe states typically have higher uncertainty scores $U(s, a)$ due to the stochasticity of those states, which increases the flow imbalance $\Delta(s, a)$. This property enables the algorithm to prioritize safer actions and reduce the likelihood of visiting $\mathcal{S}_{u}$. In other words, the agent’s action-selection process is biased to favor actions with both high Q-values and low uncertainty scores. Indeed, we encourage the agent to choose the action for which it has the highest confidence in the outcome among the actions experienced in that state. This approach enables a directed exploration strategy that prioritizes safer actions with higher rewards. Example $1$ further illustrates the influence of this uncertainty score on both Q-values and policy decisions. In particular, it demonstrates the trade-off between exploiting high-return actions and mitigating highly uncertain actions to ensure safer exploration and more stable learning.

In the context of POMDPs, we use SARSA($\lambda$) variant. Specifically, $Q_s$ and $T_t$ distributions associated with the agent's current observation $o$ and the available actions $a_{1},\cdots ,a_{N}$ are integrated into the uncertainty score term $U(o, a_i)$. Moreover, the flexibility of our approach enables its extension to scenarios that involve using multiple consecutive observations $o_{t-n},\cdots ,o_{t}$, which can help the algorithm better capture non-Markovian properties.

\textit{Example: }Consider a fixed state $s$ with four available actions. For this state, we have access to $Q_s$ and $T_t$ distributions over actions as depicted in Fig. \ref{OT2}. Table. \ref{tab:example} shows how the uncertainty score term affects the Q-values and action selection. As observed, $a_1$ is the safest action and $a_{4}$ offers the highest reward. In decision-making, standard SARSA prefers action $a_{4}$, whereas OT-guided SARSA chooses $a_{2}$ to balance the reward and safety.


\begin{theorem}\label{thm:safety_bounds}Given a finite MDP $\mathcal{M} = (\mathcal{S}, \mathcal{A}, \mathcal{T}, \mathcal{R}, \gamma)$ with state space $\mathcal{S}$ and action space $\mathcal{A}$, let $\mathcal{S}_{\text{u}} \subset \mathcal{S}$ be a set of unsafe states to avoid. We define $\pi^0$ as the $\epsilon$-greedy policy with respect to $Q(s, a)$, and $\pi_t^b$ as the $\epsilon$-greedy policy with respect to $Q(s, a) - \beta U(s, a)$ as in \ref{eq:14}. For each $s \in \mathcal{S}$, define the set of unsafe actions:
\[
\mathcal{A}_{\text{u}}(s) = \left\{ a \in \mathcal{A} : \mathcal{T}(s' \in \mathcal{S}_{\text{u}} \mid s, a) \geq \delta \right\}
\]
where $\mathcal{T}(s' \in \mathcal{S}_{\text{u}} \mid s, a) = \sum_{s' \in \mathcal{S}_{\text{u}}} \mathcal{T}(s' \mid s, a)$, and $\delta > 0$ is a threshold. Assume the Markov chains induced by the limiting policies $\pi^0$ and $\pi^b = \lim_{t \to \infty} \pi_t^b$ are irreducible and aperiodic. For sufficiently large $\beta$ and under the conditions:
\begin{enumerate}
    \item $\sum_t \alpha_t(s, a) = \infty$,
    \item $\sum_t \alpha_t(s, a)^2 < \infty$,
    \item $\mathcal{R}(s, a)$ is bounded,
\end{enumerate}
there exists a constant $0 < c < 1$ such that:
\begin{equation}
\lim_{t \to \infty} \Pr_{\pi_t^b}\left[s \in \mathcal{S}_{\text{u}}\right]
\leq  c\cdot \Pr_{\pi^0}\left[s \in \mathcal{S}_{\text{u}}\right].
\end{equation}
\end{theorem}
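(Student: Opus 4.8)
The plan is to proceed in three stages: first establish that the limiting policies are well defined, then show that a large $\beta$ forces the modified greedy action to be safe in every state, and finally translate this per-state action bias into a multiplicative reduction of the stationary mass on $\mathcal{S}_{\text{u}}$. First I would invoke conditions (1)--(3). The Robbins--Monro step-size conditions together with bounded rewards are exactly the hypotheses under which GLIE SARSA converges, so $Q(s,a)$ converges almost surely to a fixed limit $Q^\infty(s,a)$. Because the target-value buffer defining $T_t$ is driven by the same converging quantities, the distributions $Q_s$ and $T_t$, the optimal plan $P^\ast$, the flow imbalance $\Delta(s,a)$, and hence the normalized score $U(s,a)$ in \eqref{eq:13} all converge to limiting values $U^\infty(s,a)$. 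This yields a well-defined limiting behavioral policy $\pi^b=\lim_{t\to\infty}\pi^b_t$ and justifies passing the limit inside $\Pr_{\pi^b_t}$.

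Next I would argue a \emph{separation} property: for every $s$ with $\mathcal{A}_{\text{u}}(s)\neq\emptyset$, each unsafe action $a\in\mathcal{A}_{\text{u}}(s)$ carries a strictly positive limiting score $U^\infty(s,a)\geq u_{\min}>0$, whereas at least one safe action attains a strictly smaller score. Intuitively, an action whose successor distribution places mass $\geq\delta$ on the stochastic set $\mathcal{S}_{\text{u}}$ produces a persistent mismatch between $Q_s$ and $T_t$, so its net flow $\Delta(s,a)$ in \eqref{eq:12} cannot vanish relative to $W(Q_s,T_t)$. Granting this gap, the comparison $Q^\infty(s,a')-\beta U^\infty(s,a')$ is dominated, for all sufficiently large $\beta$, by any safe action over any unsafe one, since the bounded $Q$ term cannot overcome the penalty $\beta u_{\min}$. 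Hence the $\arg\max$ in \eqref{eq:14} lands on a safe action, and the $\epsilon$-greedy structure forces $\pi^b(a\mid s)=\epsilon/|\mathcal{A}|$ for every $a\in\mathcal{A}_{\text{u}}(s)$, the minimal exploration floor.

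I would then convert this into a per-step inflow bound. Writing $d_\pi$ for the unique stationary distribution of the irreducible, aperiodic chain induced by $\pi$, one has
\begin{equation}
\Pr_{\pi}[s\in\mathcal{S}_{\text{u}}]=\sum_{s} d_\pi(s)\,P_\pi(\mathcal{S}_{\text{u}}\mid s),\qquad P_\pi(\mathcal{S}_{\text{u}}\mid s)=\frac{\epsilon}{|\mathcal{A}|}\sum_{a}\mathcal{T}(s'\in\mathcal{S}_{\text{u}}\mid s,a)+(1-\epsilon)\,\mathcal{T}(s'\in\mathcal{S}_{\text{u}}\mid s,a^\ast),
\end{equation}
where $a^\ast$ is the greedy action. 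The $\epsilon$-floor term is identical under both policies, so in any state where $\pi^0$ selects an unsafe greedy action ($\mathcal{T}\geq\delta$) while $\pi^b$ selects a safe one ($\mathcal{T}<\delta$), the inflow strictly drops by $(1-\epsilon)$ times the corresponding gap; for small $\delta$ and $\epsilon$ the residual leakage through safe actions cannot compensate, giving $P_{\pi^b}(\mathcal{S}_{\text{u}}\mid s)\leq P_{\pi^0}(\mathcal{S}_{\text{u}}\mid s)$ in every state and a strict decrease in at least one positively visited state.

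The hard part is the final step: the two policies induce \emph{different} stationary distributions, so a pointwise reduction of the per-step inflow does not immediately bound $\sum_{s'\in\mathcal{S}_{\text{u}}}d_{\pi^b}(s')$ against $\sum_{s'\in\mathcal{S}_{\text{u}}}d_{\pi^0}(s')$. To close this gap I would use a Markov-chain comparison argument, expressing $d_{\pi^b}$ as a perturbation of $d_{\pi^0}$ through the fundamental matrix $(I-P+\mathbf{1}d^\top)^{-1}$ or, equivalently, via a coupling of the two chains, to show that the contraction of the boundary inflow propagates to a contraction of the stationary unsafe mass. Since both limiting masses are fixed constants, one may then simply take $c$ to be their ratio, which the preceding monotonicity argument renders strictly below one. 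Establishing the strictly positive score gap $u_{\min}$ in the second stage and controlling the sign of the perturbation here are the two places where the argument must do genuine work; everything else is bookkeeping.
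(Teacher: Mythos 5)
Your first two stages track the paper's own proof almost exactly: the paper likewise invokes the Robbins--Monro conditions only to assert $\pi_t^b \to \pi^b$, and it does \emph{not} prove your separation property either---it simply assumes $U(s,a) \geq \eta > 0$ for $a \in \mathcal{A}_{\text{u}}(s)$ and $U(s,a) < \eta$ otherwise, then runs the same large-$\beta$ argument to conclude that the greedy action under $\pi^b$ lies outside $\mathcal{A}_{\text{u}}(s)$ while unsafe actions retain only the $\epsilon/|\mathcal{A}|$ exploration floor. So granting the gap $u_{\min}$ puts you on equal footing with the paper there. The genuine gap is in your final stage, and it sits exactly at the theorem's crux. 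You correctly observe that $\pi^0$ and $\pi^b$ induce different stationary distributions, so the pointwise inequality $p_{\pi^b}(s) \leq p_{\pi^0}(s)$ does not by itself compare $\sum_s \mu_{\pi^b}(s)\,p_{\pi^b}(s)$ with $\sum_s \mu_{\pi^0}(s)\,p_{\pi^0}(s)$---but you then only \emph{name} a technique (the fundamental matrix $(I - P + \mathbf{1}d^\top)^{-1}$, or a coupling) without executing it, and you give no argument controlling the sign of the resulting perturbation, which you yourself concede is where "the argument must do genuine work." Worse, your proposed choice of $c$ as the ratio of the two limiting unsafe masses is circular: with that definition the claimed inequality is an identity, and the entire content of the theorem is the unproven assertion that this ratio is strictly below one. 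A smaller issue: your claim that $P_{\pi^b}(\mathcal{S}_{\text{u}} \mid s) \leq P_{\pi^0}(\mathcal{S}_{\text{u}} \mid s)$ holds "in every state" can fail in states where both greedy actions are safe but differ, since $\pi^b$ may pick a safe action with larger (still sub-$\delta$) unsafe mass; the paper hedges this with "typically."

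For comparison, the paper closes the final step not by perturbation theory but by an elementary steady-state flow-balance (cut) argument: with $A = \mathcal{S}_{\text{u}}$ and $B = \mathcal{S} \setminus \mathcal{S}_{\text{u}}$, stationarity gives $\sum_{s \in B} \mu_\pi(s)\,p_\pi(s) = \sum_{s \in A} \mu_\pi(s)\,[1 - p_\pi(s)]$; it then (i) assumes $p_{\pi^b}(s) < p_{\pi^0}(s)$ on $B$ and $p_{\pi^b}(s) \approx p_{\pi^0}(s)$ on $A$ to conclude the inflow $\alpha_{\pi^b} < \alpha_{\pi^0}$, (ii) assumes a uniform exit rate $1 - p_\pi(s) \geq \gamma$ on $A$ so that $\mu_\pi(A) \leq \alpha_\pi/\gamma$, and (iii) sets $c = \alpha_{\pi^b}/\alpha_{\pi^0}$. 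Note that the paper's step (i) silently makes the very leap you flagged---$\alpha_{\pi^b}$ and $\alpha_{\pi^0}$ weight the one-step inflows by \emph{different} stationary measures---and step (iii) needs the exit rates under the two policies to match in order to turn two one-sided bounds into a ratio bound. So your diagnosis of where the difficulty lies is accurate, and arguably more honest than the paper's treatment; but as written your attempt stops short of a proof at precisely that point, whereas the paper at least closes the argument under explicitly stated (if strong) auxiliary assumptions. To complete your route you would need to actually carry out the coupling or fundamental-matrix estimate and show the perturbation of $\mu_\pi(\mathcal{S}_{\text{u}})$ is negative and bounded away from zero, or else adopt the paper's flow-balance shortcut together with its extra hypotheses.
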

\begin{proof}
Since $\mathcal{S}$ and $\mathcal{A}$ are finite, any stationary policy $\pi : \mathcal{S} \to \mathcal{P}(\mathcal{A})$ ($\mathcal{P}(\mathcal{A})$ represents the space of probability distributions over $\mathcal{A}$) induces a Markov chain with transition matrix:
\[
P_{\pi}(s' \mid s) = \sum_{a \in \mathcal{A}} \pi(a \mid s) \mathcal{T}(s' \mid s, a).
\]
Given that the Markov chains under $\pi^0$ and $\pi^b$ are irreducible and aperiodic, they each have unique stationary distributions $\mu_{\pi^0}$ and $\mu_{\pi^b}$, respectively, satisfying:
\[
\mu_{\pi}(s') = \sum_{s \in \mathcal{S}} \mu_{\pi}(s) P_{\pi}(s' \mid s), \quad
\sum_{s \in \mathcal{S}} \mu_{\pi}(s) = 1.
\]
The long-run probability of being in $\mathcal{S}_{\text{u}}$ under policy $\pi$ is:
\[
\Pr_{\pi}\left[s \in \mathcal{S}_{\text{u}}\right] = \mu_{\pi}(\mathcal{S}_{\text{u}}) = \sum_{s \in \mathcal{S}_{\text{u}}} \mu_{\pi}(s).
\]
Under the given convergence conditions $\pi_t^b \to \pi^b$, hence:
\[
\lim_{t \to \infty} \Pr_{\pi_t^b}\left[s \in \mathcal{S}_{\text{u}}\right] = \mu_{\pi^b}(\mathcal{S}_{\text{u}}).
\]
Let us define the one-step probability of entering an unsafe state from state $s$ under policy $\pi$:
\[
p_{\pi}(s) = \mathcal{T}(s' \in \mathcal{S}_{\text{u}} \mid s, \pi) = \sum_{a \in \mathcal{A}} \pi(a \mid s) \mathcal{T}(s' \in \mathcal{S}_{\text{u}} \mid s, a).
\]
For $\pi^0$ and $\pi^b$, the $\epsilon$-greedy policy is \ref{Eq:EPG} and \ref{eq:14}, respectively. We assume that $U(s, a) \geq \eta > 0$ for $a \in \mathcal{A}_{\text{u}}(s)$ and $U(s, a) < \eta$ for $a \notin \mathcal{A}_{\text{u}}(s)$, reflecting that unsafe actions have higher uncertainty. For sufficiently large $\beta$, if there exists $a_s \notin \mathcal{A}_{\text{u}}(s)$ with competitive $Q(s, a_s)$, then:
\[
Q(s, a_s) - \beta U(s, a_s) > Q(s, a) - \beta U(s, a), \quad \forall a \in \mathcal{A}_{\text{u}}(s)
\]
Since $\beta U(s, a) \geq \beta \eta$ is large for unsafe actions, the greedy action under $\pi^b$ satisfies 
$a^* = \arg\max_{a'} [Q(s, a') - \beta U(s, a')] \notin \mathcal{A}_{\text{u}}(s)$. Under $\pi^b$, if $\beta$ is large, since $\mathcal{T}(s' \in \mathcal{S}_{\text{u}} \mid s, a^*) < \delta$, and $\pi^0$ selects $a \in \mathcal{A}_{\text{u}}(s)$ with higher probability, typically $p_{\pi^b}(s) < p_{\pi^0}(s)$. Then, we define $A = \mathcal{S}_{\text{u}},\ B = \mathcal{S} \setminus \mathcal{S}_{\text{u}}$. The flow balance in steady state gives:
\[
\sum_{s \in B} \mu_{\pi}(s) p_{\pi}(s) = \sum_{s \in A} \mu_{\pi}(s) [1 - p_{\pi}(s)].
\]
Let $\alpha_{\pi} = \sum_{s \in B} \mu_{\pi}(s) p_{\pi}(s)$, the flow into $A$. If $p_{\pi^b}(s) < p_{\pi^0}(s)$ for $s \in B$, and assuming $p_{\pi^b}(s) \approx p_{\pi^0}(s)$ for $s \in A$ (similar behavior in unsafe states), then $\alpha_{\pi^b} < \alpha_{\pi^0}$. Suppose there exists $\gamma > 0$ such that $1 - p_{\pi}(s) \geq \gamma$ for $s \in A$. Then:
\[
\mu_{\pi}(A)[1 - p_{\pi}(s)]_{\text{avg}} = \alpha_{\pi}, \quad \mu_{\pi}(A) \leq \frac{\alpha_{\pi}}{\gamma}.
\]
Therefore, $\mu_{\pi^b}(A) \leq \frac{\alpha_{\pi^b}}{\gamma} < \frac{\alpha_{\pi^0}}{\gamma} \leq \mu_{\pi^0}(A)$. Let us define $c = \frac{\alpha_{\pi^b}}{\alpha_{\pi^0}}$, where $0 < c < 1$. Since $\alpha_{\pi^b} < \alpha_{\pi^0}$, we have: 
\[
\lim_{t \to \infty} \Pr_{\pi_t^b}\left[s \in \mathcal{S}_{\text{u}}\right]
= \mu_{\pi^b}(\mathcal{S}_{\text{u}})
\leq c \cdot \mu_{\pi^0}(\mathcal{S}_{\text{u}})
= c \cdot \Pr_{\pi^0}\left[s \in \mathcal{S}_{\text{u}}\right]
\]
and the proof is complete.
\end{proof}


\section{EXPERIMENTS}
\label{sec:Evaluation}

We evaluate OT-guided SARSA in three case studies with different sources of environment uncertainty, which can influence the decision-making process. In particular, we focus on stochastic reward function, stochastic transition dynamics, and stochastic observation function (Fig. \ref{fig:casestudies}). For all case studies, the cost matrix used for computing the uncertainty score is \(C_{ij} = 1\) if \(i \neq j\), and \(0\) otherwise. Furthermore, $\gamma=0.99$ and $\epsilon=0.1$ for all case studies.


    
    
    
    

\begin{figure}[!h]
    \centering
    \begin{subfigure}{0.4\textwidth}
        \centering
        \includegraphics[scale=0.4]{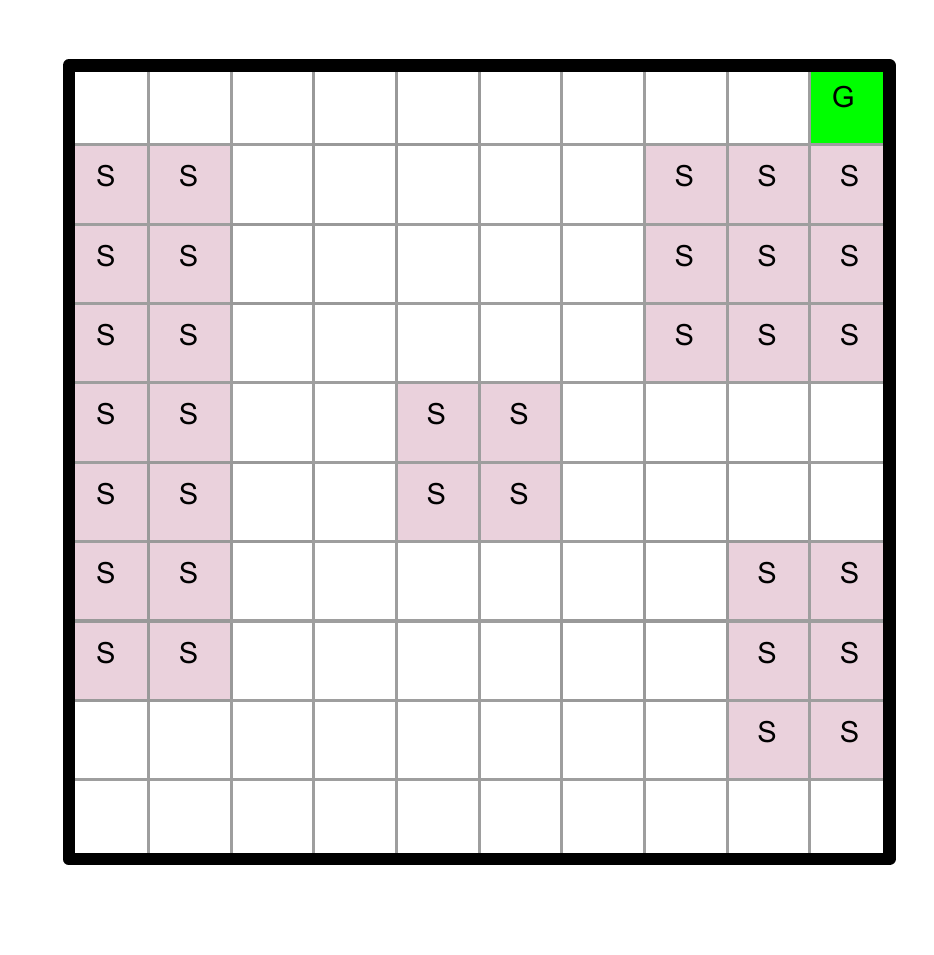}
        \caption{Grid-world environment}
        \label{fig:gridworld}
    \end{subfigure}
    \begin{subfigure}{0.4\textwidth}
        \centering
        \includegraphics[scale=0.4]{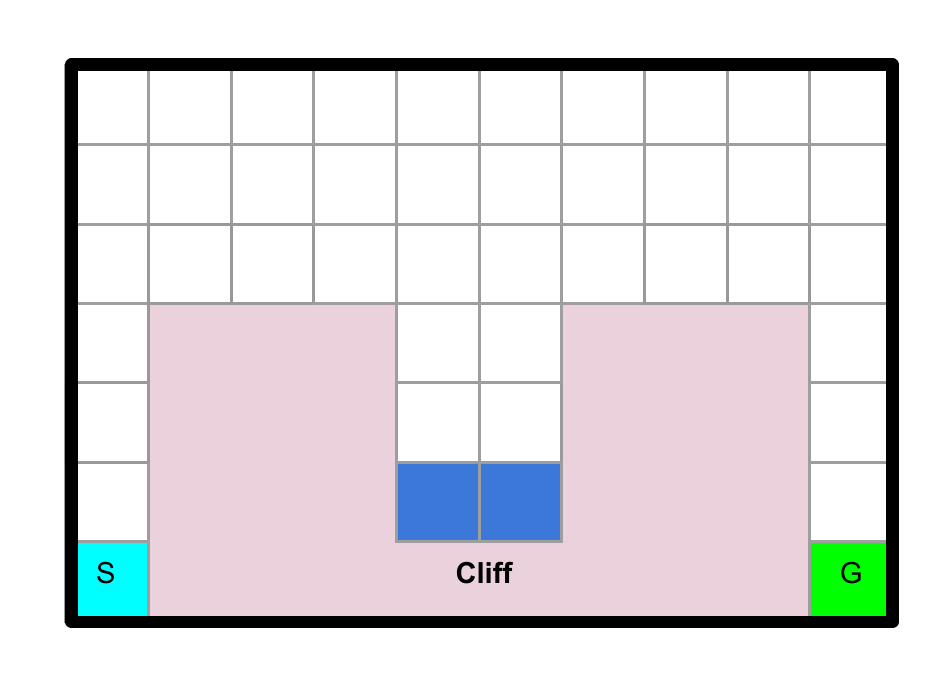}
        \caption{Cliff walking case study}
        \label{fig:cliffwalk}
    \end{subfigure}
    
    
    \begin{subfigure}{0.4\textwidth}
        \centering
        \includegraphics[scale=0.4]{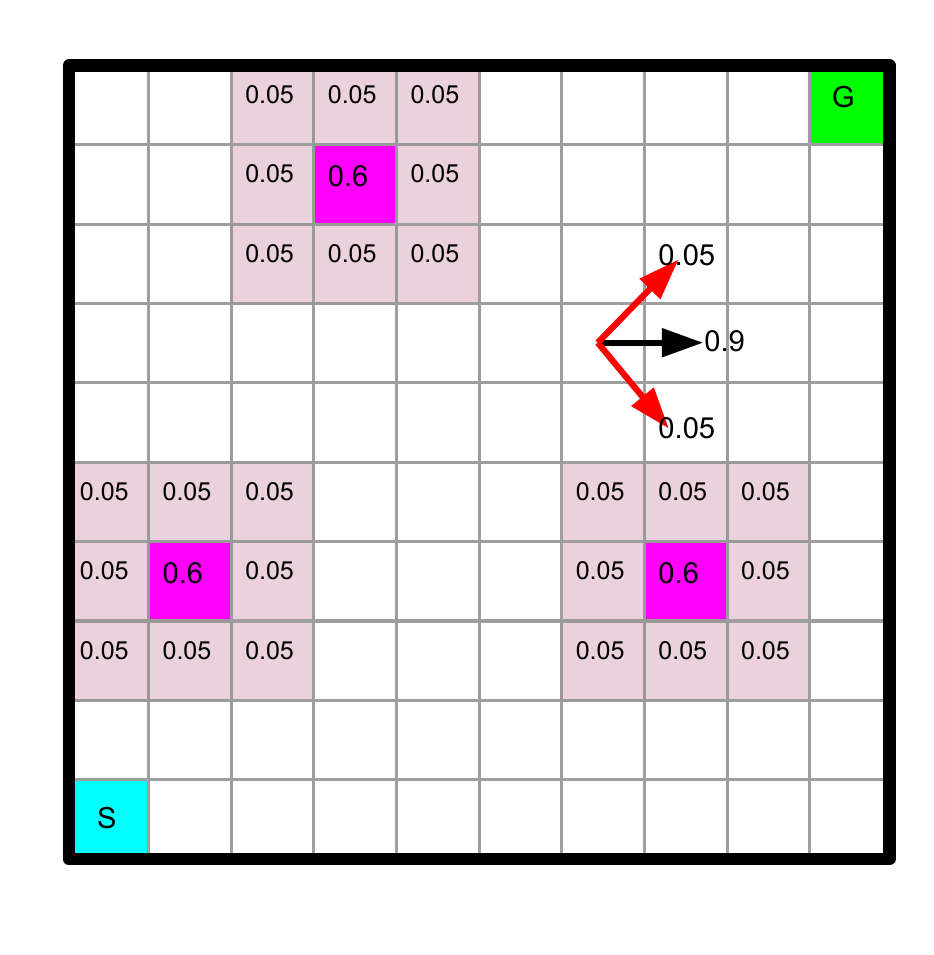}
        \caption{Rover navigation task}
        \label{fig:rover}
    \end{subfigure}
    
    \caption{Case studies: Grid-world featuring reward uncertainty, Cliff walking with traps featuring transition function uncertainty, Rover navigation task involving partial observability in obstacle locations.}
    \label{fig:casestudies}
\end{figure}

\begin{figure}[!ht]
  \centering
  \begin{subfigure}{0.4\textwidth}
    \includegraphics[width=\linewidth]{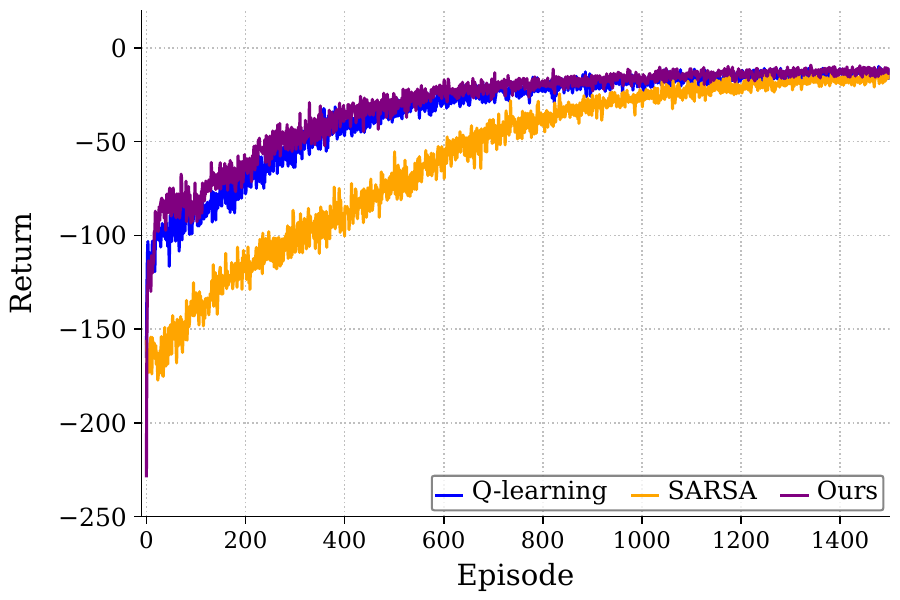}
    \caption{Grid-world}
    \label{fig:subfig-a}
  \end{subfigure}
  
  \begin{subfigure}{0.4\textwidth}
    \includegraphics[width=\linewidth]{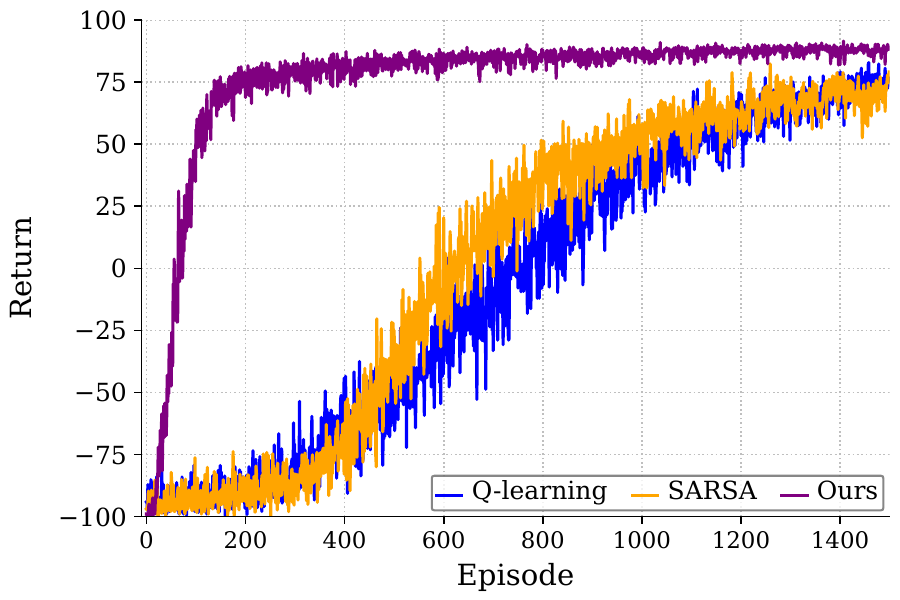}
    \caption{Cliff walking}
    \label{fig:subfig-b}
  \end{subfigure}
  
  \begin{subfigure}{0.4\textwidth}
    \includegraphics[width=\linewidth]{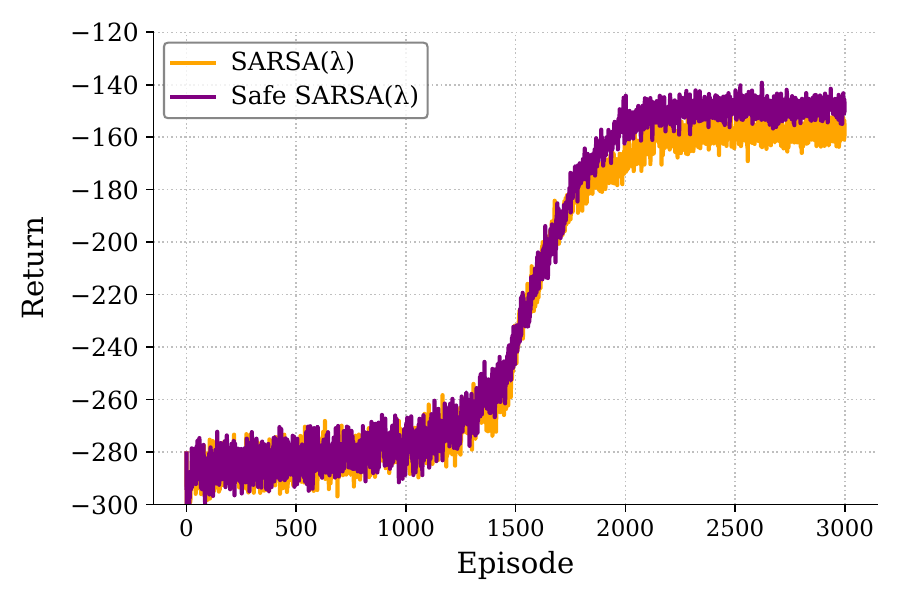}
    \caption{Rover navigation}
    \label{fig:subfig-c}
  \end{subfigure}
  \caption{Comparison between average cumulative reward over $50$ random seeds. In all case studies, our algorithm outperforms other baselines.}
  \label{returnLU}
\end{figure}

\subsection{CASE STUDY 1: GRID-WORLD WITH REWARD UNCERTAINTY}
We consider a $10 \times 10$ grid-world environment with normal, goal, and slippery states \cite{gehring2013smart}. The agent can move up, down, left, and right. For any movement to a normal state, the agent receives the reward of $-1$, while transitions to slippery states result in a random reward in the range $[-12,10]$. Collisions with walls incur a reward of $-10$. The episode terminates when the agent either reaches the goal state in the top-right corner or completes a maximum of $100$ steps.

The results in Fig. \ref{returnLU} demonstrate that the OT-guided SARSA algorithm converges to a higher return value and exhibits higher stability throughout the learning process, mainly because of the uncertainty score term, which guides exploration toward safer and more consistent actions. Furthermore, we present the state visitation map in Fig. \ref{mapGridLU}. As expected, SARSA algorithm demonstrates a high frequency of visits to slippery regions (darker red). In contrast, Q-learning performs better by exploring more efficient paths, but it still exhibits notable visits to unsafe states in comparison to our algorithm. 

\begin{figure*}[th!]
    \centering
     \includegraphics[scale=0.23]{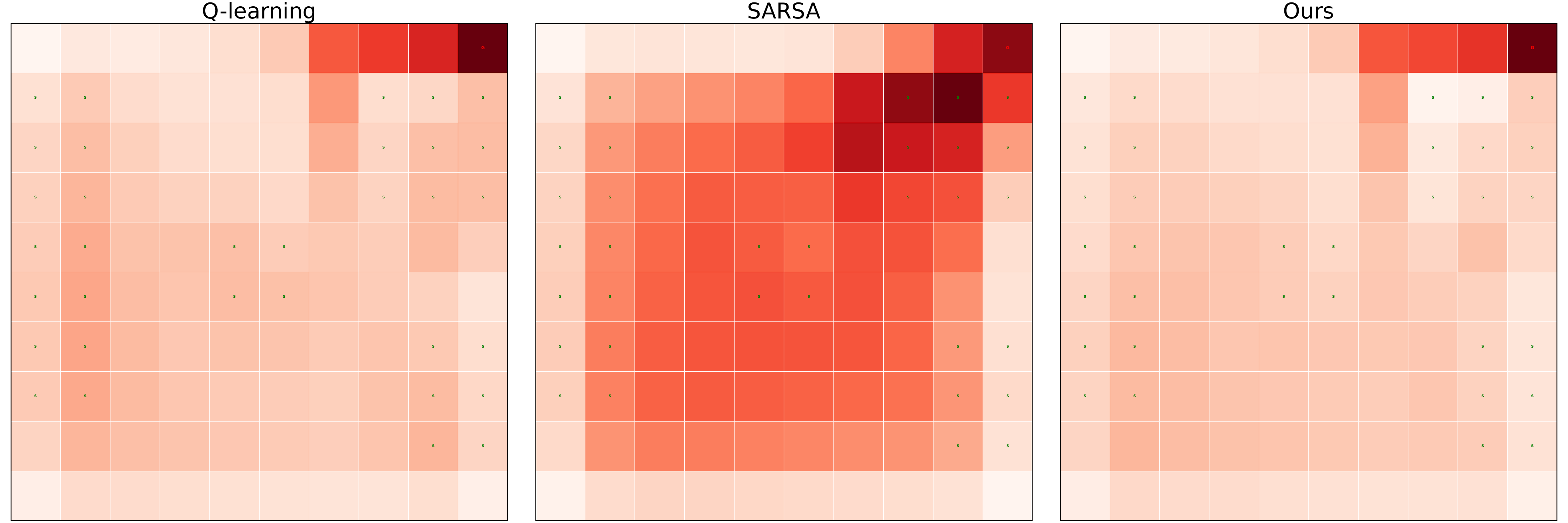}
     \caption{Comparison between state visitation density on grid-world case study. Our algorithm visitation to slippery states with reward uncertainty is less than both SARSA and Q-learning.}
     \label{mapGridLU}
\end{figure*}


\subsection{CASE STUDY 2: CLIFF WALKING WITH TRANSITION UNCERTAINTY}
This environment consists of three zones: the cliff, the trap, and the feasible regions. The agent starts at the bottom-left corner to reach the goal at the bottom-right corner while avoiding the cliff zone \cite{xuan2022sem}. The agent can move freely within the feasible region in four directions: up, down, left, and right. Entering the cliff region results in task failure. Entering the trap region forces the agent to move downward, regardless of its chosen action, eventually ending up in the cliff region. Each movement yields a reward of $-1$. If the agent collides with the environment borders, its position remains unchanged, but it still earns the movement reward. Reaching the target earns the agent a reward of $101$, while entering the cliff region results in a $-49$ penalty. Upon reaching either the target or the cliff region, the training episode restarts, and the agent is reset to its starting position.

\begin{figure*}[h!]
    \centering
     \includegraphics[scale=0.3]{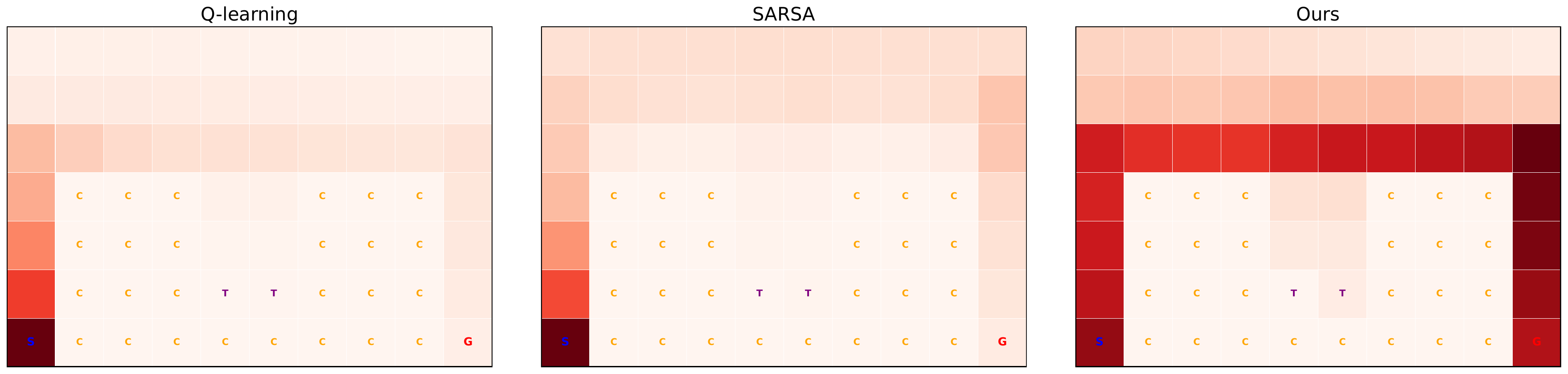}
     \caption{Comparison between the density of state visitation on the cliff walking case study. Our algorithm provides fewer visits to the cliff region than SARSA and Q-learning, and reaches the goal state more often (darker red).}
     \label{mapCliffLU}
\end{figure*}

As shown in Fig. \ref{returnLU} for this case study, our algorithm achieves rapid convergence, mainly because of prioritizing less uncertain actions. It also demonstrates a higher confidence in its return estimates. In contrast, both Q-learning and SARSA display greater variability, which reflects higher uncertainty in their returns. For this case study, the state visitation graph in Fig. \ref{mapCliffHU} highlights the limitations of SARSA, where the agent struggles to identify the optimal path to the goal state. Consequently, most episodes end without successfully reaching the goal. Q-learning performance is closer to our algorithm, however, the rate of reaching the goal and escaping from the cliff in the initial episodes is lower than ours.

\subsection{CASE STUDY 3: ROVER NAVIGATION WITH PARTIAL OBSERVABILITY}
In this case study, a rover must navigate a two-dimensional terrain map represented as a $10 \times 10$ grid, where $3$ of the grid cells are obstacles \cite{ahmadi2023risk}. Each grid cell represents a state, and the rover can move in eight geographic directions. However, the environment is stochastic; for example, as shown in Fig. \ref{fig:casestudies}, when the rover takes the action east, it moves to the intended grid cell with a probability of $0.9$ but may move to one of the adjacent cells with a probability of $0.05$. Partial observability exists because the rover cannot directly detect the locations of obstacle cells through its measurements. When the rover moves to a cell adjacent to an obstacle, it can identify the exact location of the obstacle (magenta) with a probability of $0.6$ and observe a probability distribution over nearby cells (pink). Colliding with an obstacle results in an immediate penalty of $10$, while reaching the goal region provides no immediate reward. All other grid cells impose a penalty of $2$.


As demonstrated in Fig. \ref{returnLU}, OT-guided SARSA($\lambda$) achieves superior performance compared to standard SARSA($\lambda$).

\begin{table*}[!t]
    \centering
    \caption{Ablation study on grid-world environment. Average return and std over different algorithms for low-uncertainty (LU), high-uncertainty (HU), and large state space scenarios.}
    \label{tab:grid}
    \renewcommand{\arraystretch}{2} 
    \begin{tabular}{lcccc}
        \toprule
        \textbf{Scenario} & \textbf{SARSA} & \textbf{Q-learning} & \textbf{Ours} \\
        \midrule
        $10 \times 10$ LU & $-15.70 \pm 17.32$ & $-13.38 \pm 14.53$ & \boldmath{\hl{$-12.04$} $\pm$ \hl{$12.83$}} \\
        $10 \times 10$ HU & $-16.26 \pm 20.38$ & $-14.14 \pm 18.52$ & \boldmath{\hl{$-12.92$} $\pm$ \hl{$16.79$}} \\
        $30 \times 30$ HU & $-119.16 \pm 126.65$ & $-108.21 \pm 118.76$ & \boldmath{\hl{$-98.77$} $\pm$ \hl{$112.15$}} \\ 
        \bottomrule
    \end{tabular}
    \renewcommand{\arraystretch}{1.0} 
\end{table*}

\subsection{ABLATION STUDY}
\noindent{\textbf{Performance under varying uncertainty and scale: }}We further analyze the performance of our algorithm under different levels of uncertainty and environment size. In particular, we consider the above settings as low uncertainty (LU). To model high uncertainty (HU) in the grid-world scenario, we increase the number of slippery states from $33$ to $50$, while for cliff walking, we enhance the number of traps from $2$ to $6$ to have more stochasticity in the environment. For the POMDP case study, we raised the number of partially observable obstacles from $3$ to $10$ to amplify the level of uncertainty. Additionally, to assess the impact of environment size on agent performance, we expand the grid-world and rover navigation tasks from $10 \times 10$ to $30 \times 30$, and the cliff walking from $7 \times 10$ to $21 \times 30$.


\begin{table*}[!t]
    \centering
    \caption{Ablation study on cliff walking case study. Average performance metrics for low-uncertainty (LU), high-uncertainty (HU), and large state space scenarios.}
    \label{tab:cliff}
    \renewcommand{\arraystretch}{2} 
    \begin{tabular}{lcccc}
        \toprule
        \textbf{Scenario} & \textbf{Return/Failures} & \textbf{SARSA} & \textbf{Q-learning} & \textbf{Ours} \\
        \midrule
        \multirow{2}{*}{$10 \times 7$ LU} 
        & return $\pm$ std & $72.48 \pm 36.1$ & $74.59 \pm 28.52$ & \boldmath{\hl{$87.99$} $\pm$ \hl{$10.65$}}  \\
        & failures & $72.74$ & $61.46$ & \boldmath{\hl{$21.74$}} \\
        \midrule
        \multirow{2}{*}{$10 \times 7$ HU} 
        & return $\pm$ std & $69.82 \pm 38.69$ & $84.27 \pm 23.99$ & \boldmath{\hl{$89.53$} $\pm$ \hl{$8.65$}}  \\
        & failures & $196.94$ & $77.56$ & \boldmath{\hl{$30.14$}}  \\
        \midrule
        \multirow{2}{*}{$30 \times 21$ HU}  
        & return $\pm$ std & $-177.07 \pm 54.61$ & $42.11 \pm 37.44$ & \boldmath{\hl{$55.59$} $\pm$ \hl{$22.36$}}  \\
        & failures & $523.64$ & $362.66$ & \boldmath{\hl{$196.48$}}  \\
        \bottomrule
    \end{tabular}
    \renewcommand{\arraystretch}{1.0} 
\end{table*}

\begin{figure*}[ht!]
    \centering
     \includegraphics[scale=0.23]{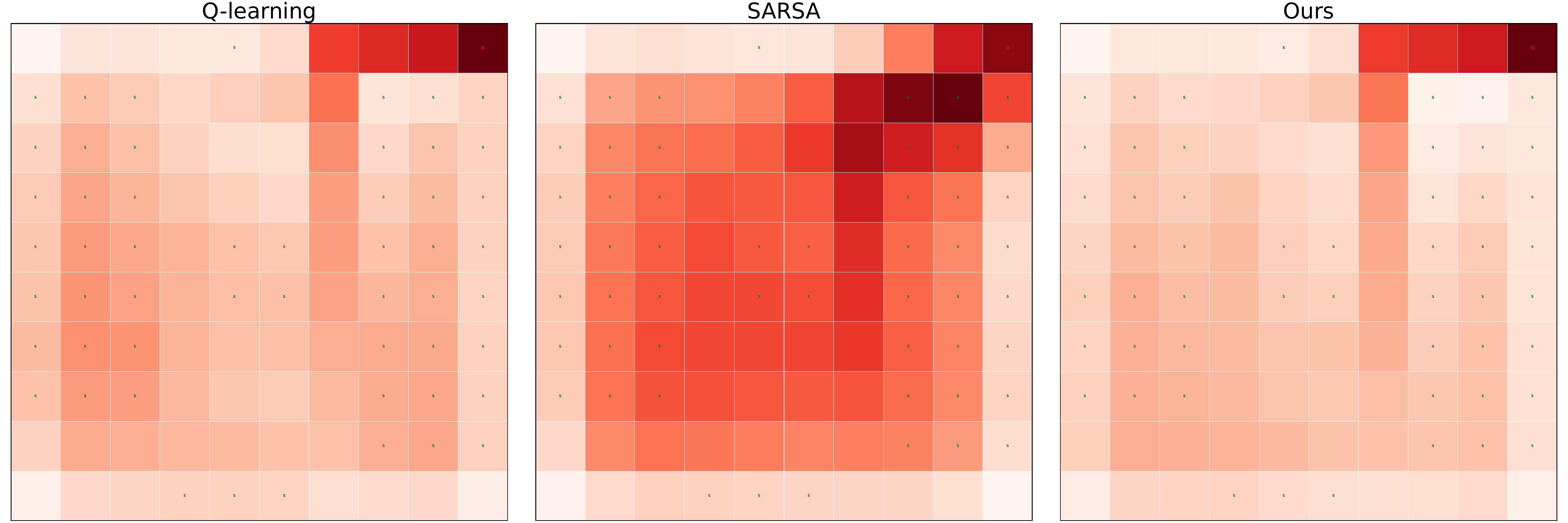}
     \caption{Comparison between state visitation density for different algorithms on grid-world with high uncertainty. Our algorithm visits slippery states less frequently than both baselines.}
     \label{mapGridHU}
\end{figure*}

\begin{figure*}[ht!]
    \centering
     \includegraphics[scale=0.29]{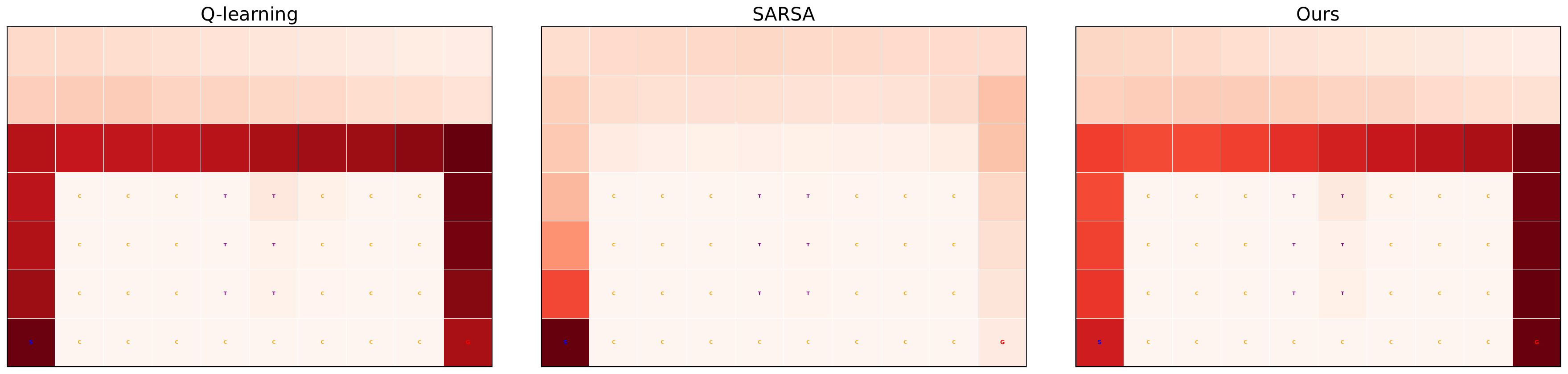}
     \caption{Comparison between the density of state visitation for different algorithms in the case of cliff walking with high uncertainty. While both our algorithm and Q-learning agents were able to find the optimal path to the goal state, our algorithm fell to the cliff states less often.}
     \label{mapCliffHU}
\end{figure*}


\begin{table*}[!t]
    \centering
    \caption{Ablation study on rover navigation task. Average performance metrics for low-uncertainty (LU), high-uncertainty (HU), and large state space scenarios.}
    \label{tab:pomdp}
    \renewcommand{\arraystretch}{2} 
    \begin{tabular}{lcccc}
        \toprule
        \textbf{Scenario} & \textbf{Return/Failures} & \textbf{SARSA($\lambda$)} & \textbf{Ours} \\
        \midrule
        \multirow{2}{*}{$10 \times 10$ LU} 
        & return $\pm$ std & $-155.04 \pm 109.09$ & \boldmath{\hl{$-149.96$} $\pm$ \hl{$106.41$}}  \\
        & failures & $2805.73$ & \boldmath{\hl{$2663.48$}} \\
        \midrule
        \multirow{2}{*}{$10 \times 10$ HU} 
        & return $\pm$ std & \boldmath{\hl{$-247.70$}} $\pm$ $102.58$ & $-264.00$ $\pm$ \boldmath{\hl{$89.72$}} \\
        & failures & $9039.29$ & \boldmath{\hl{$8948.18$}} \\
        \midrule
        \multirow{2}{*}{$30 \times 30$ HU}  
        & return $\pm$ std & $-1024.74 \pm 57.09$ & \boldmath{\hl{$-1020.70$} $\pm$ \hl{$39.26$}} \\
        & failures & $30725.38$ & \boldmath{\hl{$30026.71$}} \\
        \bottomrule
    \end{tabular}
    \renewcommand{\arraystretch}{1.0} 
\end{table*}

Table. \ref{tab:grid}, \ref{tab:cliff}, and \ref{tab:pomdp} present a quantitative comparison of OT-guided SARSA performance to other baselines, providing the average return and standard deviation (std) under LU, HU, and increasing environment size scenarios across last $20$ episodes for all case studies. The results in the Table. \ref{tab:grid}, and \ref{tab:cliff} confirm that OT-guided SARSA achieves the highest return and the lowest std in different scenarios. Moreover, in the cliff walking case study, the number of failures for our agent is significantly lower than both SARSA and Q-learning. For instance, in the LU scenario, OT-guided SARSA reduces failures by $35\%$ compared to SARSA and $30\%$ compared to Q-learning. This reduction is even greater in the other two scenarios. Overall, in both MDP case studies, our algorithm obtained a higher cumulative reward than Q-learning and SARSA, while improving the stability and the safety of the agent by avoiding unpredictable actions. Furthermore, for both MDP case studies, although increasing the size of the environment causes lower return values, OT-guided SARSA still maintains the best performance.

For the POMDP case study, as can be seen in the Table. \ref{tab:pomdp}, by increasing the partial observability degree (HU), the agent performance falls back. This shows that the performance of our algorithm can be influenced by the degree of partial observability in the environment. Specifically, when the agent receives indistinguishable or highly similar observations for different underlying states, the accuracy of the estimated Q-values, target distribution, and, consequently, the reliability of the uncertainty score becomes questionable. The number of failures for OT-guided SARSA($\lambda$) across the three scenarios is slightly lower than SARSA($\lambda$).

\noindent{\textbf{Sensitivity analysis of safety coefficient $\beta$: }}We additionally provide a sensitivity analysis on the hyperparameter $\beta$. Fig. \ref{Sensitivity} depicts the learning curve for different $\beta$ values for the cliff walking case study. As $\beta$ increases, the agent prioritizes safety more strongly, which reduces its willingness to explore potentially risky but rewarding paths. When $\beta$ is small, the uncertainty score has less impact on Q-values, which causes the algorithm to behave similarly to standard SARSA. This trade-off highlights the importance of carefully tuning $\beta$ to achieve an appropriate balance between safety and performance.

\begin{figure}[h!]
    \centering
     \includegraphics[scale=0.5]{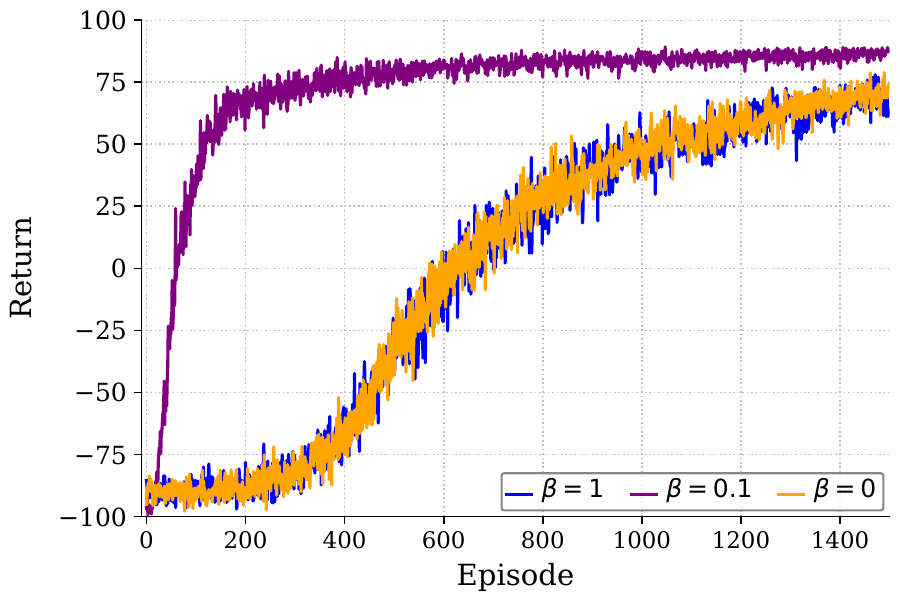}
     \caption{Sensitivity analysis of hyperparameter $\beta$ in cliff walking case study.}
     \label{Sensitivity}
\end{figure}

\section{CONCLUSION}
\label{sec: Conclusion}

We presented an uncertainty-oriented TD algorithm based on optimal transport theory. We showed the effectiveness of this approach in encouraging agent to prioritize less uncertain actions, leading to a reduction in visits to unsafe states and an improvement in cumulative rewards. Compared to standard TD algorithms, our algorithm demonstrated robust performance in environments with reward, transition, and state uncertainties. Although our algorithm outperforms standard TD learning methods, it has its limitations. Determining the optimal transport map for candidate actions at each state is computationally expensive. While using the entropy-regularized extension of OT reduces this computational cost, further improvements in computational efficiency will be a focus of future work. Another future direction is the extension of our algorithm to constrained MDPs in which the agent seeks to maximize the expected return while satisfying some constraints, typically expressed as bounds on expected costs. The optimal transport formulation could then be adapted by modifying its cost matrix to reflect these constraints. Finally, investigating our algorithm in the continuous domain by considering the distribution of action preferences derived from Q-values approximated as Gaussian is feasible and cost-efficient because Wasserstein distance provides a closed-form solution for Gaussian distributions. This would be an interesting direction to facilitate the integration of our approach into robotic control and autonomous navigation applications.



\renewcommand*{\bibfont}{\footnotesize}

\printbibliography







\end{document}